\newtheorem{theorem}{Theorem}
\newcommand{\name}{$K$\textsc{etchup}}
\title{\name: $K$-Step Return Estimation for Sequential Knowledge Distillation}
\author{%
  Jiabin Fan, Guoqing Luo, Michael Bowling$^\ddagger$, Lili Mou$^\ddagger$\\
  Dept. Computing Science, Alberta Machine Intelligence Institute (Amii) \\
  University of Alberta, Canada\\
  $^\ddagger$Canada CIFAR AI Chair, Amii\\
  \texttt{\{jiabin,gluo,mbowling\}@ualberta.ca, doublepower.mou@gmail.com}
}
\begin{document}
\maketitle
\begin{abstract}
We propose a novel k-step return estimation method (called \name) for Reinforcement Learning(RL)-based knowledge distillation (KD) in text generation tasks. Our idea is to induce a $K$-step return by using the Bellman Optimality Equation for multiple steps. Theoretical analysis shows that this 
$K$-step formulation reduces the variance of the gradient estimates, thus leading to improved RL optimization especially when the student model size is large. Empirical evaluation on three text generation tasks demonstrates that our approach yields superior performance in both standard task metrics and large language model (LLM)-based evaluation. These results suggest that our $K$-step return induction offers a promising direction for enhancing RL-based KD in LLM research.

\end{abstract}

\section{Introduction}

Knowledge distillation \cite[KD;][]{hinton2015distillingknowledgeneuralnetwork} refers to training a (typically) small student model from a teacher's output. KD has been increasingly important in the LLM era, as larger models tend to achieve higher performance \cite{kaplan2020scalinglawsneurallanguage} but are more difficult to deploy in low-resource scenarios.

KD approaches can be generally categorized into two types: intermediate-layer matching and prediction matching. Intermediate-layer matching aims to match the student's and teacher's hidden states, encouraging the student to mimic the teacher's behavior layer by layer~\cite{sun-etal-2019-patient,jiao-etal-2020-tinybert,wang-etal-2021-minilmv2}. Prediction matching informs the student of the task to solve, typically by minimizing the divergence of output distributions \cite{kim-rush-2016-sequence,wen-etal-2023-f}.

Classic KD for text generation suffers from the exposure bias problem~\cite{10.5555/2969239.2969370}, as the student learns word by word following the teacher's or ground truth's prefix, without accounting for its own previous predictions. RL alleviates this issue by enabling the student to learn through exploration. \citet{hao2022teacher} induce a step-wise reward function from a language model trained in a supervised way. Building on this, \citet{li-etal-2024-llmr} apply RL to text generation KD, where a student model is trained by the REINFORCE algorithm \cite{williams1992simple} maximizing the cumulative reward suggested by the teacher.

However, REINFORCE is known to have the high variance issue because it estimates gradient by sampled trajectories (i.e., sequences), which can vary significantly  \cite{sutton2018reinforcement}. This issue is further exacerbated in text generation scenarios due to the large action space (i.e., vocabulary size), resulting in unstable learning.

In this paper, we propose \name,  a novel $\boldsymbol {K}$\!-step return \textbf{E}stimation \textbf{T}e\textbf{CH}nique to \textbf{U}pdate \textbf{P}olicy for RL-based knowledge distillation. Our work is inspired by \citet{li-etal-2024-llmr}, who derive 
a Q-value function from the teacher's policy (next-token probabilities) and a 
reward function based on the Bellman-Optimality Equation \cite{bellman1952theory}. 
We extend their one-step Bellman Optimality to $K$ steps, and 
apply REINFORCE to optimize this $K$-step return.

Theoretical analysis shows our \name\  effectively mitigates the high variance issue of RL-based text generation KD.

We evaluated our approach on three text generation datasets categorized into different domains: XSum \cite{narayan-etal-2018-xsum} for summarization,  the Europarl corpora \cite{koehn2005europarl} for machine translation, and GSM8K \cite{cobbe2021training} for arithmetic reasoning. Experiments show that our proposed \name{} consistently achieves an add-on performance improvement when combined with the recent KD through the RL method \cite{li-etal-2024-llmr}. We also conduct an empirical analysis to show that the \name{} demonstrates lower variance and converges better than~\citet{li-etal-2024-llmr}.

\section{Methodology}
\subsection{RL Formulation of Text Generation}
Text generation can be formulated as an undiscounted Markov Decision Process (MDP) 
with tuple $(\mathcal{S}, \mathcal{A}, T, r)$. The \textit{state} space $\mathcal{S}$ 
includes all possible (sub)sequences and each of them is represented by $\mathbf y_{<t}$ for some time step $t$; notice that text generation may also depend on an input sequence, which is omitted here. The \textit{action} $a_t\in\mathcal{A}$ at step $t$ corresponds to the next 
token $\mathrm y_t$ from the vocabulary $\mathcal{V}$. The \textit{state transition} $T$ is a deterministic process in text generation, as $s_{t+1}$ is essentially the concatenation of $s_t$ and the newly generated word $a_t$. The \textit{reward} function 
$r: \mathcal{S} \times \mathcal{A} \to \mathbb{R}$ provides feedback based 
on $(s_t,a_t)$. The goal of RL is to find a \textit{policy} (distribution over actions) to maximize the expected \textit{return} (cumulative rewards). 

A key challenge in applying RL to text generation is the lack of well-defined step-wise
reward functions. To address this, \citet{hao2022teacher} and \citet{li-etal-2024-llmr} assume that a language model generates the next word from a Boltzmann distribution based on the \textit{Q-value function},\footnote{The 
Q-value function estimates the expected return (cumulative reward) of taking action 
$a$ in state $s$ and then following a given policy thereafter, defined by $q_{\pi}(s, a) = \mathbb{E_{\pi}} \left[ \sum_{t=0}^\infty \gamma^t r_{t+1} \;\middle|\; s_0 = s, a_0 = a \right]
$.} given by
\begin{align}\label{eq:Bolz_distribution}
\pi_\text{LM}(a \mid s) = \frac{\exp\bigl(q(s,a)\bigr)}{\sum_{a'} \exp\bigl(q(s,a')\bigr)},
\end{align}
Due to the shared formula, a language model's pre-softmax logit can be viewed as the Q-value function, and with the Bellman optimality equation~\cite{bellman1952theory}, a step-wise reward function can be induced by
\begin{align} \label{eq:reward_funcion}
r(s,a) = q(s,a) - \max_{a' \in \mathcal{A}} q(s', a').
\end{align}
Then, the goal of RL for text generation KD is to optimize the student's policy, denoted by $\pi_\theta$, to maximize the expected cumulative reward:
\begin{align}
J(\theta) = \mathbb{E}_{\pi_\theta} \Biggl[ \sum_{t=1}^{T} r(s_t,a_t) \Biggr],
\end{align}
The REINFORCE algorithm ~\cite{williams1992simple} is a policy gradient method, which is widely used for RL in NLP~\cite{hao2022teacher,li-etal-2024-llmr}. 
\begin{align}\label{eq:pg}
\nabla_\theta J(\theta) &= \mathbb{E}_{\pi_\theta} \left[ \sum_{t=1}^{T} G_t \nabla_\theta \log \pi_\theta(a_t \mid s_t)  \right]
\end{align}
where $G_t = \sum_{i=t}^{T} r(s_{i}, a_{i})$ is a cumulative reward (i.e., return) from step $t$, and the expectation is approximated by Monte Carlo samples from the distribution $\pi_\theta$. 

\subsection{Our \name{} Method}
\label{sec:xxx_method}
In this work, we address RL-based KD and propose to refine the learning signal $G_t$ in Eqn.~\eqref{eq:pg} by extending the one-step reward induction to $K$ steps, which alleviates the high variance issue of RL. The key idea is to apply the Bellman optimality equation for multiple steps, therefore directly connecting the Q-values at the current state with those of a future state.

We begin by considering the sum of rewards in Eqn.~\eqref{eq:reward_funcion} over $K$ consecutive steps starting from step $t$, denoted by $G_{t:t+K}$:
\begin{align}
\label{eq:k_step_BO_simplify}
G_{t:t+K}\!:=&\sum_{i=0}^{k-1} r(s_{t+i}, a_{t+i}) \notag \\ 
=& \sum_{i=0}^{k-1}\Big[q(s_{t+i}, a_{t+i}) -\max_{a'\in \mathcal A}q(s_{K+i+1},a')\Big] \notag \\ 
=& q(s_{t}, a_{t}) - \max_{a' \in \mathcal {A}} q(s_{t+K},a') 
\end{align}
where Eqn.~\eqref{eq:k_step_BO_simplify} assumes that an optimal action $a_{t+i+1} = \arg\max_{a' \in \mathcal{A}}\, q\bigl(s_{t+i+1}, a'\bigr)$ is taken. However, a student's policy may not be optimal; therefore,  Eqn.~\eqref{eq:k_step_BO_simplify} becomes an approximation, denoted by $\hat G_{t:t+K}$,:
\begin{align}
\hat G_{t:t+K} = q(s_{t}, a_{t}) - \max_{a' \in \mathcal {A}} q(\hat{s}_{t+K},a') 
\end{align}
where $\hat{s}_{t+K}$ is the state at the $(t+K)$th step by following the student's policy. This is a reasonable approximation because, in KD, a student is usually pretrained in a meaningful way ~\cite{turc2019wellread,lee2023dwt,kim2024promptkd} and the approximation will be more accurate as the optimization proceeds.

Building upon the $K$-step reward formulation, we can obtain an approximate return $\hat G_t$ by considering intervals of $K$ steps, i.e., $\hat G_{t:t+K}, \hat G_{t+K:t+2K},\cdots$. Formally, we have
\begin{align}
\label{eq:xxx_return}
\hat{G}_t
=& \sum_{i=0}^{\left\lfloor \frac{T-t+1}{K} \right\rfloor}\hat{G}_{t+iK:t+(i+1)K}  \notag  \\
\!=& \!\!\sum_{i=0}^{\left\lfloor \frac{T-t+1}{K} \right\rfloor}
  \Bigl[
    q(s_{t+iK}, a_{t+iK}) 
    \!-\!\max_{a' \in \mathcal{A}}\, q(\hat{s}_{t+(i+1)K}, a')
  \Bigr].
\end{align}
which will be used in our RL-based generation KD.

In particular, the student’s policy is used to sample a sequence of actions (i.e., output words). Then, the sequence is fed to the teacher model, which evaluates the sequence by Eqn.~\eqref{eq:xxx_return}. Finally, we follow the policy gradient formula, but use the approximate return for the update:
\begin{align}\label{eq:pg1}
\nabla_\theta J(\theta) &\approx \mathbb{E}_{\pi_\theta} \left[ \sum_{t=1}^{T}\hat G_t \nabla_\theta \log \pi_\theta(a_t \mid s_t)  \right]
\end{align}
where $\hat G_t$ is our approximate return defined in Eqn.~\eqref{eq:xxx_return}. The process is shown in Algorithm ~\ref{alg:rl_kstep}. 
\begin{algorithm}[!t]

\caption{$K$\textsc{etchup}} \label{alg:rl_kstep}
\KwIn{\hangindent=-1em Non-parallel dataset $D$; teacher Q-value function $q: \mathcal{S}\times\mathcal{A}\to\mathbb{R}$; student policy $\pi_{\theta}$ with initial parameters $\theta$; segment length $K$; learning rate $\eta$; maximum rollout length $T$; number of iterations $U$}
\KwOut{Trained student policy $\pi_{\theta}$}
\For{$j \gets 1$ \KwTo $U$}{
    % Generate a trajectory from a source sentence in D
    \!\!\!\quad Sample a source sentence $\mathbf{x}\in D$\\
    Set the initial state $s_0\gets \mathbf{x}$\\
    Generate a trajectory 
      $\tau = \{(s_0,a_0),\,(s_1,a_1)\\,\,\dots,\,(s_T,a_T)\}$
    by sampling from $\pi_{\theta}$\\
    Initialize gradient accumulator: $g\gets 0$\\
    % Backward pass to compute the approximate return for each time step
    \For{$t \gets T$ \KwTo $0$}{
         \If{$t = T$}{
              % Terminal state: return is just the Q-value at (s_T, a_T)
              \small{$\hat{G}_T\gets q(s_T,a_T)$}\;
         }
         \ElseIf{$T-t < k$}{
              % Not enough steps remaining: use one-step reward plus the return from the next time step
              \small{$\hat{G}_t\gets\Bigl[q(s_t,a_t)-\max\limits_{a'\in\mathcal{A}}\,q(s_{t+1},a')\Bigr] \!+ \!\hat{G}_{t+1}$}\;
         }
         \Else{
              % Enough steps available: use the k-step reward plus the return k steps ahead
              \small{$\hat{G}_t\gets\Bigl[q(s_t,a_t)-\max\limits_{a'\in\mathcal{A}}\,q(s_{t+K},a')\Bigr] \!+ \!\hat{G}_{t+K}$}\;
         }
         % Accumulate the policy gradient using the REINFORCE update rule
         \small{$g\gets g+\hat{G}_t\,\nabla_{\theta}\log\pi_{\theta}(a_t\mid s_t)$}\;
    }
    % Update the student policy parameters
    \small{$\theta\gets \theta+\eta\,g$}\;
}
\Return{$\pi_{\theta}$}\;
\end{algorithm}

\subsection{Bias and Variance Analysis}
\label{sec:bias_variance_analysis}

Although the REINFORCE algorithm \cite{williams1992simple} estimates gradients in an unbiased way, it is known to be noisy and prone to high variance in the gradient estimation, which may lead to instability in learning \cite{greensmith2004variance, mnih2016asynchronous, bjorckhigh}.

A standard method to mitigate this issue is to subtract a \emph{baseline} term $b_t$ from the actual return:
\begin{align} \label{eq:baseline}
\hat{G}_t = G_t - b_t.
\end{align}
For example, the average return over a batch ~\cite{rosenberg2021variance} is commonly used as the baseline term to stabilize the REINFORCE algorithm.

Our \name{} approach is a variant of REINFORCE with baseline. This can be seen by examining the difference between the actual return  $G_t$ and our approximate return $\hat G_t$. In our KD application, the actual return $G_t$ is given by
accumulating the reward defined in Eqn.~\eqref{eq:reward_funcion}. In other words, we have
\begin{align}\label{eq:sample_return}
G_{t} = \sum_{i=0}^{T} \Bigl( q(s_{t+i}, a_{t+i}) - \max_{a' \in \mathcal{A}} q(s_{t+i+1}, a') \Bigr).
\end{align}
Combining Eqns.~\eqref{eq:xxx_return}, \eqref{eq:baseline}, and \eqref{eq:sample_return}, we can interpret our approximate return $\hat G_t$ as introducing a baseline term with the following form
\begin{align}\label{eq:xxx_baseline}
\resizebox{1.03\linewidth}{!}{$
\begin{aligned}
b_t 
= \sum_{\substack{i=0 \\ i \not\equiv 0 \,(\mathrm{mod}\, k)}}^{T-1} \Bigl[
      q(s_{t+Ki+1}, a_{t+Ki+1}) -
      \max_{a' \in \mathcal{A}} q(s_{t+Ki+1}, a') 
    \Bigr].
\end{aligned}
$}
\end{align}

Unlike conventional, policy-independent baselines \cite{sutton2018reinforcement, rosenberg2021variance}, our baseline depends on the selected actions and thus introduces bias into the expected return estimation. However, our approach can alleviate the high variance issue of REINFORCE with mild assumptions, as shown by the following theorem.

\begin{theorem}[Variance Reduction via $K$-Step Return]
\label{thm:variance_reduction}
Let $G_t$ be the actual return and $\hat{G}_t$ be the $K$-step approximate return for some sequences sampled from the student policy $\pi$. Assuming that the state--action--reward tuples $(s_t, a_t, r_t)$ are iid drawn at different steps, we have:
\begin{align} \label{eq:variance}
\mathrm{Var}[\hat{G}_t] \le \mathrm{Var}[G_t].
\end{align}

\end{theorem}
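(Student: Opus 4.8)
The plan is to exploit the decomposition $\hat{G}_t = G_t - b_t$ already established in Eqns.~\eqref{eq:xxx_return}, \eqref{eq:baseline}, and \eqref{eq:sample_return} — rewritten as $G_t = \hat{G}_t + b_t$ — and then to show that the two summands are \emph{uncorrelated} under the stated iid assumption, so that the variance of their sum splits additively and the nonnegative term $\mathrm{Var}[b_t]$ is exactly what is shed.

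First I would pin down which time steps each summand depends on. Unrolling Eqn.~\eqref{eq:xxx_return}, the approximate return $\hat{G}_t$ is a function only of the tuples $(s_j,a_j,r_j)$ whose indices lie on the ``$K$-step grid'' $\{t,\,t+K,\,t+2K,\dots\}$: these supply the terms $q(s_{t+iK},a_{t+iK})$ and $\max_{a'} q(\hat s_{t+(i+1)K},a')$. By contrast, inspecting Eqn.~\eqref{eq:xxx_baseline}, the baseline $b_t$ is a sum over exactly the \emph{complementary} (interior) indices $j$ with $j \not\equiv t \pmod{K}$, each term $q(s_j,a_j)-\max_{a'}q(s_j,a')$ depending on step $j$ alone. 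Thus the index set driving $\hat{G}_t$ and the index set driving $b_t$ are disjoint.

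Second, I would invoke the assumption that the tuples at distinct steps are iid: disjointness of the two index sets makes the family of tuples on the grid jointly independent of the family at the interior steps, hence $\hat{G}_t$ (a measurable function of the former) and $b_t$ (a measurable function of the latter) are independent, so $\mathrm{Cov}[\hat G_t, b_t]=0$. Substituting into $\mathrm{Var}[G_t]=\mathrm{Var}[\hat G_t + b_t]=\mathrm{Var}[\hat G_t]+2\,\mathrm{Cov}[\hat G_t,b_t]+\mathrm{Var}[b_t]$ yields $\mathrm{Var}[G_t]=\mathrm{Var}[\hat G_t]+\mathrm{Var}[b_t]\ge \mathrm{Var}[\hat G_t]$, with equality exactly when $b_t$ is almost surely constant (in particular when $K=1$, where $b_t\equiv 0$ and $\hat G_t = G_t$). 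An equivalent but more pedestrian route expands $\mathrm{Cov}[G_t,b_t]$ and $\mathrm{Var}[b_t]$ as sums of per-step covariances and checks term by term that they coincide, giving $\mathrm{Var}[\hat G_t]=\mathrm{Var}[G_t]-\mathrm{Var}[b_t]$ directly.

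The step I expect to be the main obstacle is making the ``disjoint steps'' claim airtight at the boundaries: the truncated sums in Eqn.~\eqref{eq:xxx_return} and the branches of Algorithm~\ref{alg:rl_kstep} involve a final partial block (the cases $t=T$ and $T-t<K$), so one must verify that the leftover terms still partition cleanly into a grid part feeding $\hat G_t$ and an interior part feeding $b_t$, with no index double-counted. A secondary point worth a remark is that the iid hypothesis is applied to the atomic units $(s_j,a_j,r_j)$ as stated, even though the induced reward $r_j = q(s_j,a_j)-\max_{a'}q(s_{j+1},a')$ nominally couples steps $j$ and $j+1$ through the deterministic transition; the proof takes the assumption at face value, treating each tuple as an independent draw.
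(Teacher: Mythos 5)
Your proposal is correct, but it reaches the inequality by a genuinely different route than the paper. The paper's proof in Appendix~A computes both variances in closed form under the iid assumption: it splits $G_t$ and $\hat G_t$ into per-step terms, assigns each term the variance $\sigma_{\mathcal S,\mathcal A}^2+\sigma_{\mathcal S}^2$, and concludes by counting --- $\mathrm{Var}[G_t]=(T-t+1)(\sigma_{\mathcal S,\mathcal A}^2+\sigma_{\mathcal S}^2)$ versus $\mathrm{Var}[\hat G_t]=\bigl(\lfloor\frac{T-t}{k}\rfloor+1\bigr)(\sigma_{\mathcal S,\mathcal A}^2+\sigma_{\mathcal S}^2)$ --- so the inequality follows because the $K$-step return simply has fewer summands. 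You instead use the orthogonal decomposition $G_t=\hat G_t+b_t$, observe that $\hat G_t$ is a function of the tuples on the $K$-step grid while $b_t$ is a function of the complementary off-grid tuples, and invoke independence of disjoint index sets to get $\mathrm{Var}[G_t]=\mathrm{Var}[\hat G_t]+\mathrm{Var}[b_t]\ge\mathrm{Var}[\hat G_t]$. Your version buys an exact identification of the shed variance as $\mathrm{Var}[b_t]$ (the variance of the very baseline the paper introduces in Eqn.~\eqref{eq:xxx_baseline}), a clean equality characterization, and it sidesteps the paper's implicit treatment of $q(s_j,a_j)$ and $\max_{a'}q(s_j,a')$ at the \emph{same} step as contributing independent variances (your grouping keeps same-step quantities together, so their correlation is harmless). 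The paper's version buys an explicit rate: the variance shrinks roughly as $\frac{T-t}{K}$, which is what the main text cites when arguing the benefit grows with $K$. The two caveats you flag --- the boundary/partial-block bookkeeping and the fact that the induced reward nominally couples steps $j$ and $j{+}1$ --- are real, but the paper's own proof glosses over both in exactly the same way, so they do not distinguish the approaches.
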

\begin{proof}
See Appendix~\ref{apd:proof_theorem1}.
\end{proof}

The iid assumption is widely adopted in theoretical RL research \cite{kearns2000bias,bhandari2018finite,xureanalysis}, as in many environments, although consecutive samples are correlated, these dependencies decay rapidly.

Overall, Theorem~\ref{thm:variance_reduction} indicates that our \name{} alleviates variance at a power rate as $K$ increases. Although this method introduces a bias term in the gradient estimation, the bias is effectively mitigated: it diminishes for smaller values of $K$ and converges to zero as the student policy becomes more optimal. Detailed bias analysis is given in Appendix~\ref{apd:bias_analysis}. Such a trade-off is wildly applied in existing RL literature, as seen in Temporal Difference (TD) learning \cite{sutton1988learning}, Actor--Critic algorithms~\cite{konda1999actor, mnih2016asynchronous}, and Deep Q-Network~\cite[DQN;][]{mnih2015human}.

\section{Experiments}
\begin{figure*}[ht]
    \centering
    % \vspace*{-30pt}
    % \captionsetup{skip=-70pt}
    \includegraphics[width=.85\textwidth]{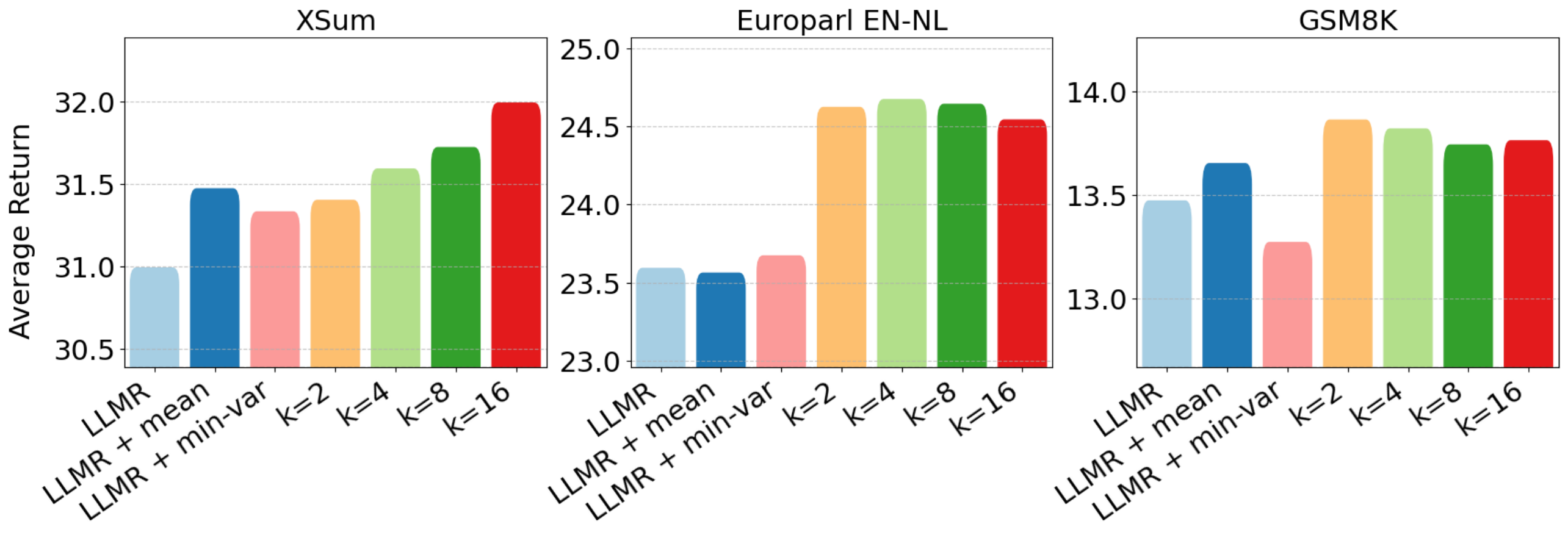}
    \caption{Average predicted return vs Approaches.}
    \label{fig:vol_plot}
    
\end{figure*}

In this section, we present the empirical evaluation and analysis of our proposed \name. We begin by describing the datasets, baseline methods, and implementation 
details, followed by the main results and detailed analyses.

\subsection{Settings}

\paragraph{Tasks, Datasets, and Metrics.} We evaluate our approach on various text generation tasks that are frequently considered in previous literature~\cite{maruf2018contextual, magister2023teaching, wen-etal-2023-f, touvron2023llama, biderman2024lora, wang2024self}.
\begin{compactitem}[$\bullet$]
\item \textbf{XSum Summarization.} The Extreme Summarization (XSum) is a challenging dataset for text summarization introduced by ~\citet{narayan-etal-2018-xsum}, where the summaries are highly abstractive as they emphasize key ideas with novel wordings. The dataset consists of approximately 226,000 BBC articles paired with single-sentence summaries. We employ 
ROUGE~\cite{lin-2004-rouge} as the primary metric, which is common practice in 
summarization \cite{ravaut2024context,van2024adapted,agarwal2025many}.

\item \textbf{Europarl EN--NL Translation.} Europarl \cite{koehn2005europarl} is a high-quality, multilingual parallel corpus extracted from European Parliament proceedings. Its texts are professionally produced and carefully aligned, ensuring reliable, well-edited data. We choose English-to-Dutch, a relatively low-resource translation direction, to facilitate our distillation experiments. We report the BLEU score \cite{papineni2002bleu}, character-level F score \cite[chrF,][]{popovic-2015-chrf}, and Translation Edit Rate \cite[TER,][]{snover2006study}, 
following the standard evaluation in machine translation 
\cite{barrault-etal-2019-findings,hrabal2024cuni}.

\item \textbf{GSM8K Reasoning.} Grade School Math 8K~\cite[GSM8K,][]{cobbe2021training} is a popular dataset consisting of around 8,000 grade school-level math problems with detailed step-by-step solutions. It is designed to evaluate a model's abilities in mathematical reasoning and multi-step problem-solving. The standard evaluation metric for GSM8K is solution accuracy \cite{wang2024self, setlur2025rl}, which is adopted in our experiments.
\end{compactitem}
We employ the standard training, validation, and test splits for XSUM~\cite{narayan-etal-2018-xsum} and Europarl~\cite{koehn2005europarl}. For GSM8K, the standard split comprises only training and test sets~\cite{cobbe2021training}. We adopt the open-source split provided by \citet{wang2024self}, where the validation set is constructed by randomly selecting examples from the original training data.

\paragraph{Implementation Details.}\label{subsec:implementation_detals} 

In our KD, the teacher is the 3B-parameter FLAN-T5-XL model~\cite{chung2024scaling}, which shares the same architecture as prior work \cite{li-etal-2024-llmr}. For the summarization task, we directly prompt FLAN-T5-XL as it has already been instruction-finetuned for summarization. On the other hand, FLAN-T5-XL yields subpar performance if prompted directly; we finetune the model as the teacher, which is commonly practiced in KD research \cite{de2024hybrid,setiawan2024accurate,ye2025best}. 

The student uses the 250M-parameter T5-base model \citet{t52020}, which is consistent with the configuration in \citet{wen-etal-2023-f} and \citet{li-etal-2024-llmr}. 

Following previous KD studies \cite{wen-etal-2023-f,li-etal-2024-llmr}, we perform pre-distillation, where the student is pretrained by the cross-entropy loss based on the teacher's outputs. This ensures a meaningful initialization of the student model and enables effective exploration for reinforcement learning. Notice that text generation has a much larger state--action space than a typical RL environment such as Atari games~\cite{mnih2015human}. The student performs greedy action selection when generating a sequence. Our return induction builds upon $K$-step Bellman optimality equations, and the hyperparameter $K$ is critical in our framework. We report performance for $K \in \{2, 4, 8, 16\}$ in our experiments.

\paragraph{Competing Methods.}
We compare our \name{} against both divergence-based and RL-based text generation KD:
\begin{compactitem}
    \item \textbf{SeqKD} \cite{kim-rush-2016-sequence}. This is a classic method where the student maximizes likelihood of teacher-generated sequences.
    
    \item \textbf{KL Distillation} \cite{hinton2015distillingknowledgeneuralnetwork}. It minimizes the Kullback--Leibler (KL) divergence between student and teacher distributions. Notice that SeqKD is a hard version of KL distillation.
    
    \item \textbf{JS Distillation} \cite{wen-etal-2023-f}. Jensen--Shannon (JS) divergence is a symmetric 
    divergence that overcomes the over-smoothing problem of KL divergence~\cite{wei2019neural,wen-etal-2023-f}.
    
    \item \textbf{TVD Distillation} \cite{wen-etal-2023-f}. The Total Variation Distance (TVD) is another symmetric divergence and is shown to outperform other methods~\cite{wen-etal-2023-f}. Such a method is also explored in~\citet{agarwal2024onpolicy} with a tunable ratio between the two terms of TVD.
    
    \item \textbf{LLMR} \cite{li-etal-2024-llmr}. In this method, a reward function is induced from a teacher language model by one-step Bellman optimality~\cite{hao2022teacher}. Then, the student model is trained by RL towards the induced reward.
\end{compactitem}
Since our approach reduces the variance of RL, we consider alternative variance reduction techniques under the LLMR framework:
\begin{compactitem}
    \item \textbf{LLMR + Mean Baseline.} Using the average reward in a batch as a baseline is commonly used for stabilizing RL training~\cite{sutton2018reinforcement}.
    
    \item \textbf{LLMR + Min-Variance Baseline.} This is an advanced variant that is shown to be theoretically optimal when the baseline is derived from batch data \cite{rosenberg2021variance}.
\end{compactitem}

For a fair comparison, we apply the same settings in Section~\ref{subsec:implementation_detals} (when applicable) to the competing methods as we do to our approach.
Specifically, all methods adopt pre-distillation to ensure a meaningful student initialization, and all RL methods use the same action selection procedure.

\subsection{Main Results}
\renewcommand{\name}{$K$\textsc{etchup}}
\begin{table*}[htbp]

\centering
\resizebox{\textwidth}{!}{
\begin{tabular}{|ll|ccc|ccc|c|}
\hline
\multicolumn{2}{|l|}{\multirow{2}{*}{Model}} &
  \multicolumn{3}{c|}{XSum} &
  \multicolumn{3}{c|}{Europarl EN-NL} &
  \multicolumn{1}{c|}{GSM8K} \\ \cline{3-9} 
\multicolumn{2}{|l|}{} &
  \multicolumn{1}{c}{ROUGE-1$^\uparrow$} &
  \multicolumn{1}{c}{ROUGE-2$^\uparrow$} &
  \multicolumn{1}{c|}{ROUGE-L$^\uparrow$} &
  \multicolumn{1}{c}{BLEU4$^\uparrow$} &
  \multicolumn{1}{c}{chrF$^\uparrow$} &
  \multicolumn{1}{c|}{TER$^\downarrow$} &
  \multicolumn{1}{c|}{Accuracy(\%)$^\uparrow$} \\ \hline
\multicolumn{2}{|l|}{Teacher} &
  41.32 & 18.86 & 33.79 & 25.36 & 51.11 & 63.17 & 40.71 \\ \hline
\multicolumn{2}{|l|}{Student} &
  19.60 & 3.19 & 13.72 & 0.95 & 24.80 & 100.21 & 0.00 \\ \hline
\multicolumn{1}{|l|}{\multirow{11}{*}{\shortstack{Distilled\\Student}}} &
  SeqKD~\cite{kim-rush-2016-sequence} &
  33.54 & 11.90 & 26.67 & 22.09 & 48.33 & 66.18 & 20.02 \\
\multicolumn{1}{|l|}{} &
  KL~\cite{hinton2015distillingknowledgeneuralnetwork} &
  34.36 & 12.86 & 27.38 & 22.35 & 48.58 & 65.93 & 23.96 \\
\multicolumn{1}{|l|}{} &
  JS \cite{wen-etal-2023-f} &
  34.87 & 13.18 & 27.84 & 22.55 & 48.71 & 65.74 & 24.72 \\ 
\multicolumn{1}{|l|}{} &
  TVD \cite{wen-etal-2023-f}&
  35.17 & 13.30 & 28.10 & 22.63 & 48.66 & 65.79 & 24.94\\ \cline{2-9}
\multicolumn{1}{|l|}{} &
  LLMR~\cite{li-etal-2024-llmr} &
  35.54 & 13.70 & 28.56 & 22.72 & 49.04 & 65.38 & 25.21 \\ 

\multicolumn{1}{|l|}{} &
  LLMR + Mean baseline &
  35.60 & 13.76 & 28.64 & 22.67 & 49.03 & 65.39 & 25.39 \\ 
\multicolumn{1}{|l|}{} &
 LLMR + Min-Var baseline &
35.59 & 13.78 & 28.66 & 22.70 & 48.97 & 65.55 & 25.10\\ \cline{2-9}
% \multicolumn{1}{|l|}{} &
%   k-step LLMR (ours) &
%   \textbf{36.03} & \textbf{13.95} & \textbf{28.89} & \textbf{22.93} & \textbf{49.25} & \textbf{65.15} & \textbf{25.71} \\ \hline
\multicolumn{1}{|l|}{} &
  \name{} ($K=2$) &
  \textbf{36.03} & 13.95 & \textbf{28.89} & 22.93 & \textbf{49.25} & \textbf{65.15} & 25.32 \\ 
\multicolumn{1}{|l|}{} &
  \name{} ($K=4$) &
  35.96 & \textbf{13.96}& 28.87 & 22.93 & 49.21 & 65.21 & 25.40 \\ 
\multicolumn{1}{|l|}{} &
  \name{} ($K=8$) &
  35.68 & 13.88 & 28.76 & \textbf{22.95} & 49.23 & 65.20 & \textbf{25.71} \\
\multicolumn{1}{|l|}{} &
  \name{} ($K=16$) &
  35.31 & 13.68 & 28.51 & 22.94& 49.24 & 65.18 & 25.47 \\ \hline
\end{tabular}
}

\caption{Main results on XSum, Europarl EN--NL, and GSM8K datasets.
The best student result is in \textbf{bold}.
$^{\uparrow/\downarrow}$The higher/lower, the better. We prompt the teacher and off-the-shelf student in a zero-shot manner to gain the first two rows. We select the best checkpoint based on the performance of the held-out validation set and report the performance of these checkpoints on the test set for all distilled students.
% $^\dagger$Results are obtained by directly running the released code bases. Other results are obtained by our own development.
}\label{tab:main-result}

\end{table*}

As mentioned in Section~\ref{sec:xxx_method}, the primary advantage of our \name{} is its enhanced RL optimization compared with classic REINFORCE. In this part, we will first show that our approach indeed achieves a higher return (cumulative reward) in RL. Then, we will show that our approach leads to improved performance in NLP tasks.

\paragraph{Return in RL.} The goal of RL is to learn a policy maximizing the cumulative reward, also known as the return. Therefore, we may use it to evaluate the outcome of RL training. 

Figure~\ref{fig:vol_plot} shows the return score that is defined in Eqn.~\eqref{eq:sample_return}, where the return is averaged over different test samples, using various RL methods in the three NLP tasks. As seen, our \name{} consistently achieves a higher average return than competing approaches across all the tasks. This indicates that our \name{} learns a superior policy in terms of the return, which is precisely the RL objective.

In addition, we observe that an increased $K$ may not necessarily improve the return. This is because our \name{} introduces bias despite its reduced variance (Section~\ref{sec:bias_variance_analysis}). Therefore, a trade-off should be sought when choosing the $K$ value.

\paragraph{NLP Task Performance.}
Table~\ref{tab:main-result} presents the results of our approach in terms of text generation performance.

We first examine the performance of directly prompting the teacher and the non-distilled student model in a zero-shot manner, offering empirical lower and upper bounds for the KD process. Note that the bounds are not theoretically guaranteed; instead, KD is empirically expected to improve the student's performance but may still underperform the teacher, especially when the student is small. In our setup, the student is a T5-base model, which does not yield reasonable performance when prompted directly.

We then consider divergence-based distillation methods, including SeqKD and KL/JS/TVD distillations. 
As seen from the table, symmetric methods (JS, TVD)---which involve both exploitation of teacher predictions and exploration based on student predictions---tend to surpass asymmetric methods (SeqKD, KL), where the student follows teacher predictions without any exploration. The results are consistent with previous findings \cite{wen-etal-2023-f,agarwal2024onpolicy}.

Next, we evaluate LLMR~\cite{li-etal-2024-llmr}, a text generation KD approach using REINFORCE. Results show that LLMR provides certain performance gain over non-RL KD methods, which is likely stemmed from the student’s self-exploration, aligning with the observations in \citet{li-etal-2024-llmr} and other recent RL-based text generation research \cite{ouyang2022training, liu2024deepseek, deepseekai2025deepseekr1incentivizingreasoningcapability}.

To mitigate the high variance of REINFORCE in LLMR, we incorporate classic RL baseline terms (mean baseline and min-variance baseline) that are estimated from batch data. However, these methods are not effective in our scenario, as text generation has a very large state--action space, which makes the generated outputs in a batch less representative and the baseline term less useful.

By contrast, our \name{} employs a novel baseline formulation that largely reduces the variance of RL (Theorem~\ref{thm:variance_reduction}) and improves RL optimization (Figure~\ref{fig:vol_plot}). Consequently, it delivers a noteworthy add-on performance gain on top of LLMR across three text generation tasks. 

In the experiment, we also observe that a moderate $K$ between 2 to 8 leads to the highest NLP performance, which is consistent with the return analysis in Figure~\ref{fig:vol_plot}. It is also noticed that RL return and NLP performance are not perfectly correlated, as the induced reward may not fully reflect the task metric such as BLEU and ROUGE scores, which is also known as reward hacking ~\cite{amodei2016concrete,hao2022teacher,ouyang2022training}.

\textbf{Summary.} 
Our main results show that the proposed \name{} (with a moderate $K$) improves RL optimization, which is generally translated to higher performance in various NLP tasks.

\subsection{In-Depth Analyses}\label{sec:analysis}

\paragraph{Variance and bias analysis.}
\begin{figure}[!t]
    \centering
    \begin{subfigure}[b]{0.49\columnwidth}
        \includegraphics[width=\textwidth]{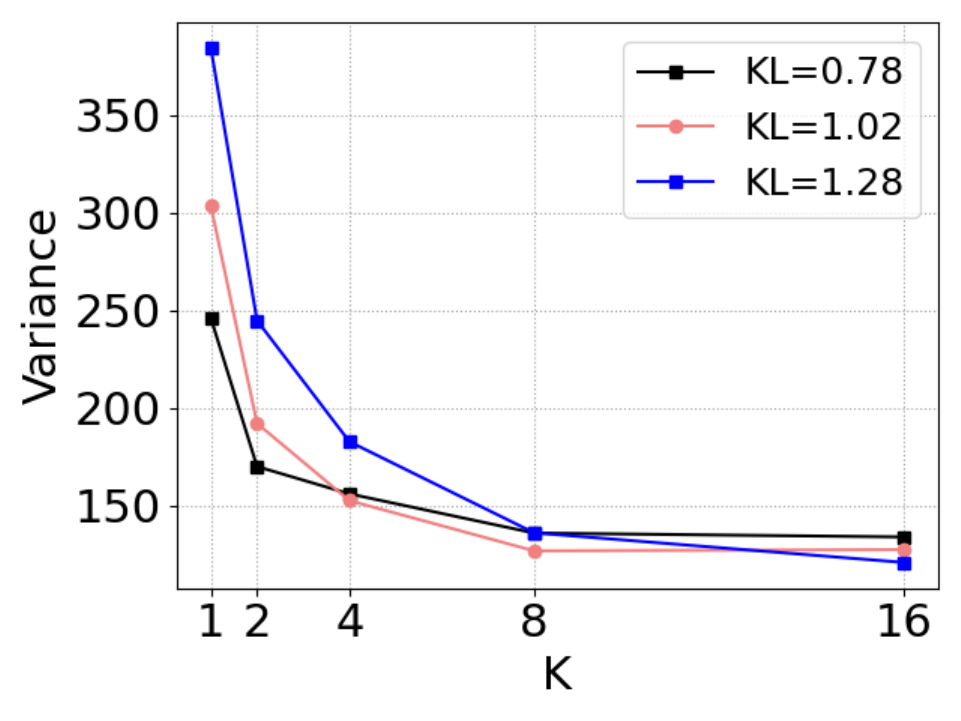}
        \caption{Variance} % Subcaption for the first figure
        \label{fig:variance}
    \end{subfigure}
    \hfill
    \begin{subfigure}[b]{0.49\columnwidth}
        \includegraphics[width=\textwidth]{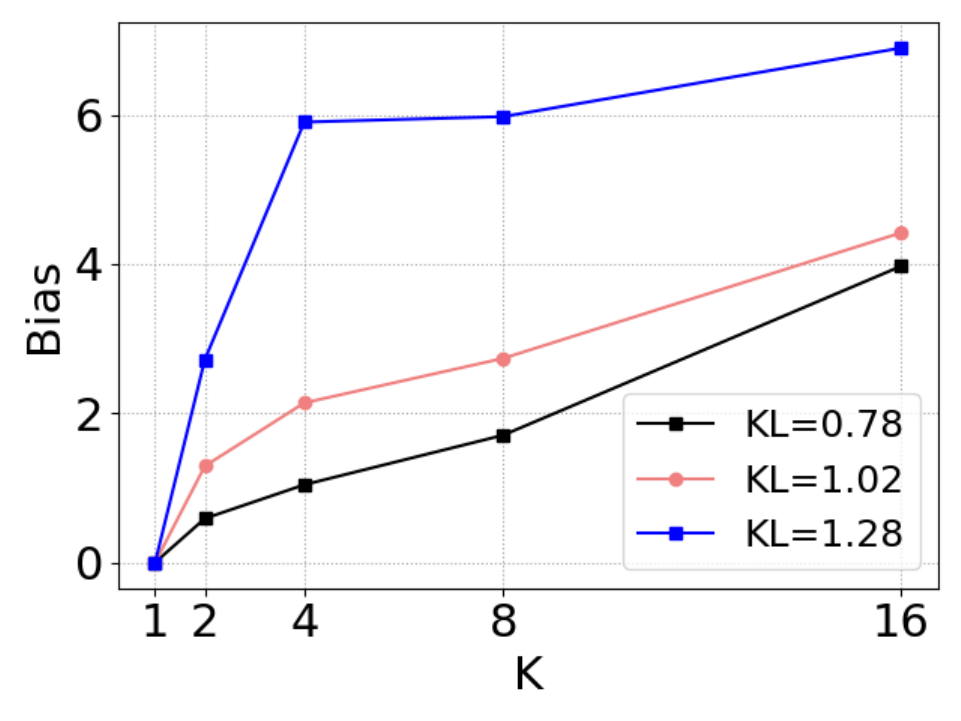}
        \caption{Bias} % Subcaption for the second figure
        \label{fig:bias}
    \end{subfigure}
    \caption{Variance and bias with different $K$ values.} % Main caption
\end{figure}
\begin{figure*}[ht]
    \centering
    \begin{tabular}{c} 
    \includegraphics[width=1.0\textwidth]{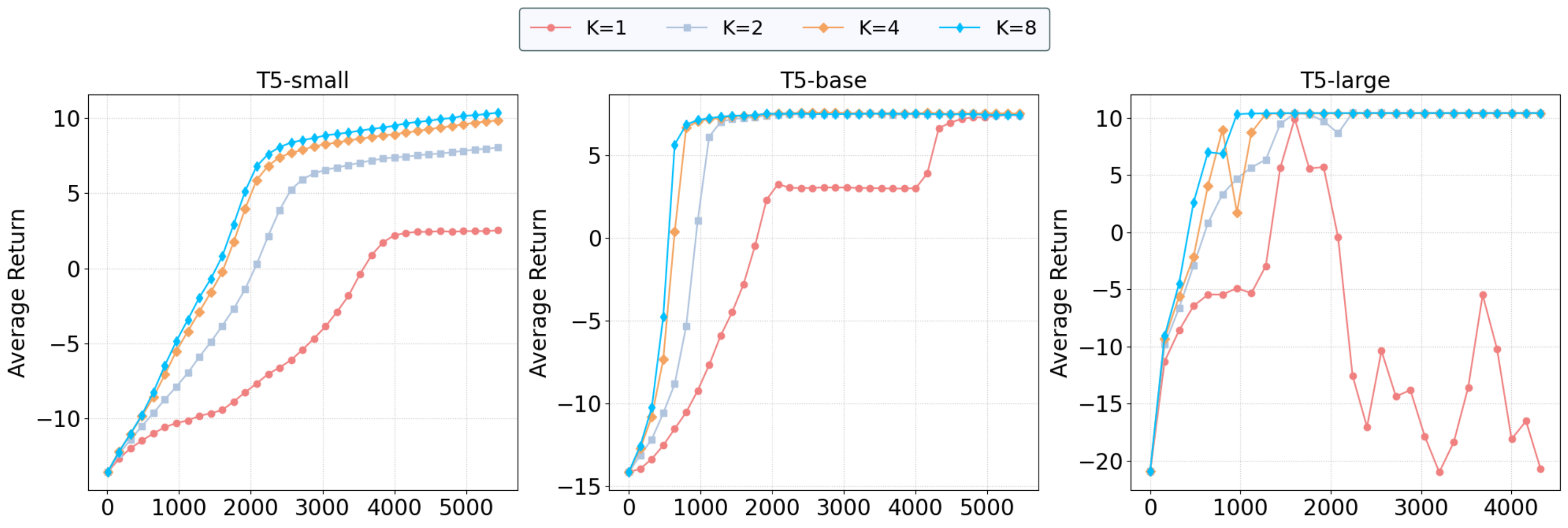}
    \end{tabular}
 
    \caption{Learning curves with different $K$ values and model sizes, where the $x$-axis is the number of training steps.}
    \label{fig:plot_model_size}
\end{figure*}
 As shown by the theoretical analysis in Section~\ref{sec:bias_variance_analysis}, our approach provides a bias--variance trade-off by largely reducing the variance, although introducing a bias term. We empirically verify them in this analysis.  

Figure~\ref{fig:variance} shows the variance of the $K$-step return, where we sample $32$ outputs for a given input and use Eqn.~\eqref{eq:xxx_variance} to estimate the variance of return; the variance is further averaged over 10K input samples. For the bias, we use Eqn.~\eqref{eq:return_bias} for empirical estimation, and the results are shown in Figure~\ref{fig:bias}. We choose the value of $K$ from $\{1, 2, 4, 8, 16\}$ to see the trends. Note that $K=1$ corresponds to the competing approach LLMR~\cite{li-etal-2024-llmr}. In addition, we examine the impact of the initial student policy by considering students with various KL divergence levels from the teacher policy: a smaller KL divergence indicates that the student and teacher are more resemblant. 

We observe that the variance decreases drastically as $K$ increases, while the bias term increases steadily. The observations align with our theoretical analysis in Section~\ref{sec:bias_variance_analysis} and Appendix~\ref{apd:bias_analysis}, suggesting the need of seeking a moderate $K$ value to balance bias and variance.\footnote{Our bias--variance trade-off is different from that in a regression analysis~\cite{ hastie2009elements, vapnik2013nature}, where the total squared error is the sum of variance and squared bias, plus an irreducible noise. By contrast, the variance of return affects the smoothness of RL training, while bias affects the optimum quality (if converging); their total effect is not given by a simple addition.}

We also observe that when the student policy is initialized closer to the teacher policy (i.e., a smaller KL divergence), our \name{} generally demonstrates lower bias and variance. The bias reduction is predicted by our theoretical analysis in Appendix~\ref{apd:bias_analysis}, whereas the variance reduction is an empirical observation. Overall, the results demonstrate that pre-distillation is important to RL training for text generation, which is consistent with previous work \cite{ouyang2022training, li-etal-2024-llmr, deepseekai2025deepseekr1incentivizingreasoningcapability}.

\paragraph{Model Size.} We analyze RL-based KD approaches with different student sizes. Figure~\ref{fig:plot_model_size} presents the learning curves for student models initialized from T5-small (77M parameters), T5-base (250M parameters), and T5-large (800M parameters) using our $K$-step approach and the competing LLMR approach. 

As seen from the learning curves in Figure~\ref{fig:plot_model_size}, the LLMR approach exhibits notable instability during RL training as the model size increases, especially when scaling to T5-large. Such a phenomenon is also reported in the RL literature: a large network is prone to overfit the limited sampled outputs, consequently leading to unstable performance on test data~\cite{henderson2018deep,cobbe2019quantifying}.

On the contrary, our \name{} largely alleviates this issue by reducing the variance, which stabilizes the learning curves. Overall, our method achieves smoother training and higher performance with all model sizes, compared with the LLMR approach.

\paragraph{LLM Evaluation.} We conduct an LLM evaluation as a surrogate of human evaluation, as classic NLP metrics (such as ROUGE and BLEU) may not fully reflect the quality of generated text. Specifically, we prompt the \texttt{Qwen2.5-72B-Instruct}~\cite{qwen2025qwen25technicalreport} LLM to conduct a pairwise evaluation of system outputs, against the commonly used KL distillation. We select TVD, LLMR, and our \name{} from Table~\ref{tab:main-result} as the competitors, as pairwise evaluation is expensive. Our LLM evaluation considers multiple criteria, including overall quality, informativeness, and coherence. For each comparison, we query the LLM four times by swapping the two candidates and their IDs (namely, A and B), as LLM is prone to ID bias~\cite{zheng2023large} and positional bias~\cite{shen2023large}.
The detailed prompts are presented in Appendix~\ref{llm_eval_prompt}.

Table~\ref{tab:human} shows the results of the LLM evaluation. We observe that our \name{} achieves the best winning rate in terms of all criteria (overall quality, informativeness, and coherence) on both datasets. These compelling results are consistent with the traditional task metrics in Table~\ref{tab:main-result} and further demonstrate the effectiveness of our \name{}.

\begin{table}[!t]
\centering
\resizebox{\linewidth}{!}{
\begin{tabular}{llcccc}
\toprule
Dataset&Method & Overall& Informativeness &Coherence&\\\midrule

\multirow{3}{*}{XSum} &TVD&67.50\%&68.15\%&65.90\%\\
&LLMR&69.95\%&70.55\%&66.30\%\\
&\name{}&\textbf{73.50\%}&\textbf{73.90\%} &\textbf{70.40\%}\\
\midrule
\multirow{3}{*}{EN-NL} &TVD&53.80\%&54.15\%&54.85\%\\
&LLMR&56.45\%&55.85\%&56.30\%\\
&\name{}&\textbf{58.85\%}&\textbf{57.95\%} &\textbf{58.45\%}\\
\bottomrule
\end{tabular}}
\caption{LLM-based evaluation on the summarization and translation tasks. EN-NL refers to Europarl EN-NL dataset. We show the winning rates of each method over the KL distillation baseline in terms of overall quality, informativeness, and coherence.}
\label{tab:human}
\end{table}

\section{Related Work}
\paragraph{Knowledge Distillation.} The foundation of KD is laid by \citet{buciluǎ2006model}, who performs KD by aligning the logits of the student with those of a teacher through squared error minimization. This framework is extended by \citet{hinton2015distillingknowledgeneuralnetwork}, who propose to use KL divergence to match the output probability distributions of the teacher and student. \citet{kim-rush-2016-sequence} extend KD to the sequence level for auto-regressive models, and \citet{wen-etal-2023-f} further propose a general framework of $f$-divergence minimization to mitigate the mode averaging and collapsing issues. These divergence-based KD approaches heavily rely on imitation of the teacher’s predictions, neglecting the student’s active exploration during learning. 

\paragraph{Reinforcement Learning for Text Generation.} Reinforcement learning (RL) offers a framework that enables a language model to explore during training. A key challenge in RL-based text generation lies in designing reward signals. Early efforts by \citet{wu2018study} employ task-specific metrics (e.g., BLEU for machine translation) as rewards, while \citet{ouyang2022training} leverage human preference data to train discriminative reward models. However, such methods require human engineering or human annotation. 

\paragraph{Bridging RL and text generation KD.} Recent work has sought to combine RL and KD by deriving rewards from teacher models. \citet{hao2022teacher} interpret a supervised-trained language model’s pre-softmax logits as Q-values, deriving a step-wise reward function via Bellman 
Optimality equation, which alleviates the sparse reward issue commonly existing in other RL text generation scenarios~\cite{wu2018study,ouyang2022training}. Building on this, \citet{li-etal-2024-llmr} extend this approach to KD settings, where they induce a reward function from a large language model (serves as a teacher) and train a 
student model to maximize the teacher-induced cumulative reward. However, RL is known to suffer from the high variance issue, and our paper proposes \name{} that largely reduces the variance of RL training. 

\paragraph{Variance Reduction in RL.}  REINFORCE with baseline \cite{sutton2018reinforcement, rosenberg2021variance} mitigates the high variance issue by subtracting a baseline term derived from batch data. Actor--Critic methods \cite{konda1999actor, mnih2016asynchronous} address this by learning a value function (critic) as the baseline term, but the inaccurate value estimates from the critic can lead to harmful updates in the actor’s policy, while a poor decision by the actor can adversely affect the critic's learning.
This often results in the divergence of RL training~\cite{bhatnagar2007incremental,fujimoto2018addressing,parisi2019td}. 
Recent RL work for large language models avoids learning a critic as the baseline term \cite{deepseekai2025deepseekr1incentivizingreasoningcapability}. Our \name{} exploits the mathematical structure of LM-induced rewards to derive a principled but non-learnable baseline for variance reduction, without learning an auxiliary neural network like a critic.

Another line of studies develops conservative policy optimization techniques like TRPO \cite{pmlr-v37-schulman15} and PPO \cite{schulman2017proximal}, which constrain policy updates to prevent instability. Our work of estimating $K$-step return is compatible with this line of research. This goes beyond the scope of our paper, but can be explored in future work.

\section{Conclusion}
In this paper, we introduce \name{}, a $K$-step return induction framework for reinforcement learning for knowledge distillation in the text generation domain. Compared with conventional RL methods, our approach effectively reduces gradient variance, shown by both theoretical and empirical analyses. Extensive experiments across diverse text generation tasks verify that our approach improves RL training and boosts NLP task performance after knowledge distillation.

\section{Limitations}
While our work demonstrates both theoretical depth and empirical effectiveness, it is not without limitations. 

First, our RL-based knowledge distillation optimizes an induced reward function, which may not fully align with the NLP task~\cite{ouyang2022training, pan2022effects, gao2023scaling}. Nevertheless, our experiments support the claim that a better RL optimization generally leads to improved NLP metrics, as shown in Table~\ref{tab:main-result}.

Also, traditional NLP metrics (such as ROUGE and BLEU scores) may not fully reflect human judgment. Therefore, we have conducted LLM evaluation as a surrogate of human studies~\cite{chiang2023can,liu2023g, lin2023llm}, during which we have carefully eliminated the bias of LLMs~\cite{zheng2023large,shen2023large}.

% Bibliography entries for the entire Anthology, followed by custom entries
%\bibliography{anthology,custom}
% Custom bibliography entries only
\bibliography{main}

\appendix

\onecolumn

\section{Proof of Theorem~\ref{thm:variance_reduction}}\label{apd:proof_theorem1}
Using $K$-step returns as a learning signal to learn a student policy $\pi$ guarantees reduced variance in return estimation compared to the full trajectory return, i.e., $\mathrm{Var}[\hat{G}_t] \leq \mathrm{Var}[G_t]$. (Detailed in Theorem~\ref{thm:variance_reduction}).

\begin{proof}
We denote the variance of $q(s,a)$ and $\max_{a' \in \mathcal{A}}\, q(s,a')$ as:
\begin{align}
  \sigma_{\mathcal S,\mathcal A}^2 &=\mathrm{Var}_{s,a}\bigl[q(s,a)\bigr], \\
 \sigma_{\mathcal S}^2 &= \mathrm{Var}_s\Bigl[\max_{a' \in \mathcal{A}}\, q(s,a')\Bigr] .
\end{align}

We first decompose the variance of the actual return $G_t$: %(a1, s1, r1, a2, s2, r2 , ..., aT, sT, rT)
\begin{align}
\mathrm{Var}[G_t] &= \mathrm{Var}\bigg[\,\sum_{i=0}^{T-t} r_{t+i}\,\bigg] & \text{[definition of $G_t$]}  \\
&= \sum_{i=0}^{T-t} \mathrm{Var}\Bigl[q(s_{t+i}, a_{t+i}) - \max_{a' \in \mathcal{A}}\, q(s_{{t+i}+1}, a')\Bigr] &\text{[iid assumption]}\\
&= \sum_{i=0}^{T-t}\Bigr( \mathrm{Var}\bigl[q(s_{t+i}, a_{t+i})\bigr] + \mathrm{Var}\bigl[\max_{a' \in \mathcal{A}}\, q(s_{{t+i}+1}, a')\bigr] \Bigr) & \text{[iid assumption]} \\
&= \sum_{i=0}^{T-t}\bigl(\sigma_{\mathcal S,\mathcal A}^2 + \sigma_{\mathcal S}^2 \bigr) \\
&= (T-t+1)\bigl(\sigma_{\mathcal S,\mathcal A}^2 + \sigma_{\mathcal S}^2 \bigr) \label{eq:var_g}.
\end{align}

Next, we decompose the variance of our $K$-step approximate return $\hat{G}_t$:
\begin{align}
\mathrm{Var}[\hat{G}_t] &= \mathrm{Var}\Biggl[\sum_{i=0}^{\left\lfloor \frac{T-t}{k} \right\rfloor}
  \Bigl( q(s_{t+ik}, a_{t+ik}) - \max_{a' \in \mathcal{A}}\, q(s_{t+(i+1)k}, a') \Bigr)\Biggr] \quad &\text{[by Eqn.~\eqref{eq:xxx_return}]} \label{eq:xxx_variance}\\
&= \sum_{i=0}^{\left\lfloor \frac{T-t}{k} \right\rfloor} \mathrm{Var}\Bigl[q(s_{t+ik}, a_{t+ik}) - \max_{a' \in \mathcal{A}}\, q(s_{t+(i+1)k}, a')\Bigr] &\text{[iid assumption]}\\
&= \sum_{i=0}^{\left\lfloor \frac{T-t}{k} \right\rfloor} \Bigl( \mathrm{Var}\bigl[q(s_{t+ik}, a_{t+ik})\bigr] + \mathrm{Var}\bigl[\max_{a' \in \mathcal{A}}\, q(s_{t+(i+1)k}, a')\bigr] \Bigr) &\text{[iid assumption]}\\
&= \sum_{i=0}^{\left\lfloor \frac{T-t}{k} \right\rfloor} \bigl(\sigma_{\mathcal S,\mathcal A}^2 + \sigma_{\mathcal S}^2 \bigr) \\
&= \bigl(\left\lfloor \frac{T-t}{k} \right\rfloor +1 \bigr)\bigl(\sigma_{\mathcal S,\mathcal A}^2 + \sigma_{\mathcal S}^2 \bigr). \label{eq:var_g_hat} 
\end{align}

Comparing Eqns.~\eqref{eq:var_g} and~\eqref{eq:var_g_hat}, we immediately have $\mathrm{Var}[\hat{G}_t] \le \mathrm{Var}[G_t]$, completing the proof.
\end{proof}

\section{Bias Analysis} \label{apd:bias_analysis}

In this section, we analyze the bias introduced by using the $K$-step return $\hat{G}_t$ in place of the actual return $G_t$. Recall that they differ by a baseline term shown in Eqns.~\eqref{eq:baseline} and~\eqref{eq:xxx_baseline}, and this discrepancy introduces bias in the return estimation:

\begin{align} \label{eq:return_bias}
\text{bias of return} = \mathbb{E}_{\pi_\theta}\Bigl[(\hat{G}_t-G_t) \Bigr] = \mathbb{E}_{\pi_\theta}\Biggl[\sum_{\substack{i=0 \\ i \not\equiv 0 \,(\mathrm{mod}\, k)}}^{T-1} \Bigl[q(s_{t+Ki+1}, a_{t+Ki+1})  - \max_{a' \in \mathcal{A}} q(s_{t+Ki+1}, a') \Bigr]\Biggr] 
\end{align}

gradient estimation:
\begin{align} \label{eq:graident_bias}
\text{bias of gradient} = \mathbb{E}_{\pi_\theta}\Bigl[(\hat{G}_t-G_t) \nabla_\theta \log \pi_\theta(a_t \mid s_t)\Bigr] = \mathbb{E}_{\pi_\theta}\Bigl[-b_t \nabla_\theta \log \pi_\theta(a_t \mid s_t)\Bigr]
\end{align}

We show below that a smaller value of $K$ reduces bias, providing a bias-variance tradeoff for REINFORCE. Further, we will show that the bias converges to zero as the student policy becomes more optimal, assuming all Q-values are distinct.

\paragraph{Bias Reduction with Smaller $K$.}

The baseline term defined in Eqn.~\eqref{eq:xxx_baseline} is given by
\begin{align} \label{xxx_baseline2}
b_t = \sum_{\substack{i=0 \\ i \not\equiv 0 \,(\mathrm{mod}\, k)}}^{T-1} \Bigl[q(s_{t+Ki+1}, a_{t+Ki+1})  - \max_{a' \in \mathcal{A}} q(s_{t+Ki+1}, a') \Bigr].
\end{align}
Since
\begin{align}
  q(s_{t+Ki+1}, a_{t+Ki+1}) - \max_{a' \in \mathcal{A}} q(s_{t+Ki+1}, a') \le 0,
\end{align}
a smaller $K$ reduced the number of terms in the summation. This decreases $|b_t|$, which in turn decreases the magnitude of the gradient bias in Eqn.~\eqref{eq:graident_bias}.

\paragraph{Bias Convergence to Zero.} Suppose the student policy is optimal, i.e., greedy with respect to the teacher's Q-value function $q(s,a)$, given by 
\begin{align} \label{eq:greedy_action}
a_{t+i} = \arg \max_{a' \in \mathcal{A}}\, q(s_{t+i}, a').
\end{align}
It is easy to see from Eqn.~\eqref{xxx_baseline2} that $b_t=0$, implying that 
\begin{align}
\mathbb{E}_{\pi_\theta}\Bigl[b_t \nabla_\theta \log \pi_\theta(a_t \mid s_t)\Bigr] = 0.
\end{align}
Suppose the Q-values for different actions are distinct (in which case $\operatorname{argmax}$ is continuous), the result further suggests that the bias term would converge to zero, if the student policy is closer to optimal during training.

\section{Experimental setting Details}\label{sec:exp_detail}

Table~\ref{tab:exp-details} shows the statistics of our datasets. As shown, we benchmarked our models on various natural language generation tasks with different data sizes. 

\begin{table}[!b]
\centering
\resizebox{0.80\columnwidth}{!}{
\begin{tabular}{|l|l|rrr|}
\hline
\multicolumn{1}{|c|}{\multirow{2}{*}{Dataset}} &
  \multicolumn{1}{c|}{\multirow{2}{*}{Task}} &
  \multicolumn{3}{c|}{\# of Samples} \\ \cline{3-5} 
\multicolumn{1}{|c|}{} &
  \multicolumn{1}{c|}{} &
  \multicolumn{1}{c}{Train} &
  \multicolumn{1}{c}{Dev} &
  \multicolumn{1}{c|}{Test} \\ \hline
XSum~\citep{narayan-etal-2018-xsum} &
  Summarization &
  202,926 &
  11,332 &
  11,333 \\
Europarl EN-NL~\citep{koehn2005europarl} &
  Machine Translation &
  1,167,808 &
  10,014 &
  10,016 \\
GSM8K~\citep{cobbe2021training} &
  Arithmetic reasoning &
  6,705 &
  768 &
  1,319 \\ \hline
\end{tabular}
}
\caption{Statistics of our datasets.}
\label{tab:exp-details}
\end{table}

For training, we used the AdamW optimizer~\cite{loshchilov2018decoupled} with default hyperparameters
$\beta = (0.9, 0.999)$ on these three datasets. We chose a small batch size of $8$ to fit the student as well as the large teacher in our GPUs. The learning rate is set as $3e^{-5}$. All student models were trained $5$ epochs for pre-distillation and another $2$ epochs for each distilling method, as additional training did not further improve performance. 

For further RL-based training, we kept using the same AdamW optimizer with default hyperparameters, as well as the batch size of $8$, and we set the learning rate to $1e^{-6}$. Hyperparameter details for RL-based training is shown in Table~\ref{tab:hyp_xsum}~\&~\ref{tab:hyp_gsm8k}.

For inference, we follow previous work and use greedy decoding consistently in these three datasets.

It should also be noted that in the arithmetic reasoning task, we follow~\citet{wang2024self} and integrate an external calculator into the decoding process of both teacher and student models, which largely improves the models' performance. More implementation details can be found in Section 3.2 in~\citet{wang2024self}.

\begin{table}[h]
    \centering
    \begin{tabularx}{0.65\textwidth}{lX}
        \toprule
        \textbf{Hyperparameter} & \textbf{Value} \\
        \midrule
        Training Epochs & 10 \\
        Train Batch size & 8 \\
        Eval Batch size & 32 \\
        Optimizer & AdamW \\
        Grad Accumulation Steps & 32 \\
        Eval Split & Test \\
        Reward Clip Range & [-100, 100] \\
        Dropout & 0.0 \\
        Learning Rate~(LR) & 0.000001\\
        Max Input Length & 1024 (Xsum) / 80 (Europarl EN-NL) \\
        Max Output Length & 64  (Xsum) / 80 (Europarl EN-NL) \\
        Evaluation & Greedy \\
        \bottomrule
    \end{tabularx}
    \caption{Hyperparameter Details for experiments on Xsum and Europarl EN-NL.}\label{tab:hyp_xsum}
\end{table}

\begin{table}[h]
    \centering
    \begin{tabularx}{0.65\textwidth}{lX}
        \toprule
        \textbf{Hyperparameter} & \textbf{Value} \\
        \midrule
        Training Epochs & 5 \\
        Train Batch size & 8 \\
        Eval Batch size & 32 \\
        Optimizer & AdamW \\
        Grad Accumulation Steps & 4 \\
        Eval Split & Test \\
        Reward Clip Range & [-100, 100] \\
        Dropout & 0.0 \\
        Learning Rate~(LR) & 0.000001\\
        Max Input Length & 200 \\
        Max Output Length & 300 \\
        Evaluation & Greedy \\
        \bottomrule
    \end{tabularx}
    \caption{Hyperparameter Details for experiments on GSM8K.}\label{tab:hyp_gsm8k}
\end{table}

\clearpage
\section{Prompts for LLM-based Pairwise Evaluation}~\label{llm_eval_prompt}

\begin{table}[h]
\centering
\resizebox{\textwidth}{!}{%
\begin{tabular}{@{}l@{}}
\toprule
Please evaluate the overall quality of the following summaries given the document. \\
\\
Evaluation Criteria: \\
Overall Quality: A good summary should be both precise and concise, summarizing the most
important points in the given document, \\ without including unimportant or irrelevant details \\
\\
Document: \textbf{[Source]} \\
Summary \textbf{[ID1]}: \textbf{[Summary-A]} \\
Summary \textbf{[ID2]}:  \textbf{[Summary-B]} \\
\\
FIRST, provide a one-sentence comparison of the two summaries for overall quality, explaining which you prefer and why.\\
SECOND, on a new line, state only the ID to indicate your choice. Your response should use the format: \\
Overall Quality: <one-sentence comparison and explanation> \\
Preferred: <summary ID> \\

\midrule
Please evaluate the informativeness of the following summaries given the document. \\
\\
Evaluation Criteria: \\
Informativeness: Does it include the most important details while excluding irrelevant content? \\
\\
Document: \textbf{[Source]} \\
Summary \textbf{[ID1]}: \textbf{[Summary-A]} \\
Summary \textbf{[ID2]}:  \textbf{[Summary-B]} \\
\\
FIRST, provide a one-sentence comparison of the two summaries for informativenss, explaining which you prefer and why.\\
SECOND, on a new line, state only the ID to indicate your choice. Your response should use the format: \\
Informativeness: <one-sentence comparison and explanation> \\
Preferred: <summary ID> \\
\midrule
Please evaluate the coherence of the following summaries given the document. \\
\\
Evaluation Criteria: \\
Coherence: Is the summary logically structured and easy to follow? \\
\\
Document: \textbf{[Source]} \\
Summary \textbf{[ID1]}: \textbf{[Summary-A]} \\
Summary \textbf{[ID2]}:  \textbf{[Summary-B]} \\
\\
FIRST, provide a one-sentence comparison of the two summaries for coherence, explaining which you prefer and why.\\
SECOND, on a new line, state only the ID to indicate your choice. Your response should use the format: \\
Informativeness: <one-sentence comparison and explanation> \\
Preferred: <summary ID> \\
\bottomrule
\end{tabular}}
\caption{Prompt templates for LLM-based pairwise evaluation on the summarization task in terms of overall quality, informativeness, and coherence. Here, ``\textbf{Source}'' is the document to be summarized. The choices of IDs are ``A'' and ``B''; ``\textbf{Summary-A}'' and ``\textbf{Summary-B}'' are replaced with model-generated texts. Since LLMs are not robust to ID and order~\cite{zheng2023large,shen2023large}, we enumerate different combinations for a given pair, resulting in four LLM queries.}
\label{tab:llm_prompt_summarization}
\end{table}

\begin{table}[h]
\centering
\resizebox{\textwidth}{!}{%
\begin{tabular}{@{}l@{}}
\toprule
Please evaluate the overall quality of the following translations from English to Dutch. \\
\\
Evaluation Criteria: \\
Overall Quality: A good translation should: 1) faithfully reflect the meaning of the source text; 2) avoid adding unnecessary or irrelevant \\ details. 3) use natural and fluent Dutch.\\
\\
Source: \textbf{[Source]} \\
Translation \textbf{[ID1]}: \textbf{[Translation-A]} \\
Translation \textbf{[ID2]}:  \textbf{[Translation-B]} \\
\\
FIRST, provide a one-sentence comparison of the two translations for overall quality, explaining which you prefer and why.\\
SECOND, on a new line, state only the ID to indicate your choice. Your response should use the format: \\
Overall Quality: <one-sentence comparison and explanation> \\
Preferred: <translation ID> \\

\midrule
Please evaluate the informativeness of the following translations from English to Dutch. \\
\\
Evaluation Criteria: \\
Informativeness: Does the translation preserve all key information without adding irrelevant details? \\
\\
Source: \textbf{[Source]} \\
Translation \textbf{[ID1]}: \textbf{[Translation-A]} \\
Translation \textbf{[ID2]}:  \textbf{[Translation-B]} \\
\\
FIRST, provide a one-sentence comparison of the two translations for informativeness, explaining which you prefer and why.\\
SECOND, on a new line, state only the ID to indicate your choice. Your response should use the format: \\
Informativeness: <one-sentence comparison and explanation> \\
Preferred: <translation ID> \\
\midrule
Please evaluate the coherence of the following translations from English to Dutch. \\
\\
Evaluation Criteria: \\
Coherence: Is the translation fluent, logically structured, and easy to understand in Dutch? \\
\\
Source: \textbf{[Source]} \\
Translation \textbf{[ID1]}: \textbf{[Translation-A]} \\
Translation \textbf{[ID2]}:  \textbf{[Translation-B]} \\
\\
FIRST, provide a one-sentence comparison of the two translations for coherence, explaining which you prefer and why.\\
SECOND, on a new line, state only the ID to indicate your choice. Your response should use the format: \\
Informativeness: <one-sentence comparison and explanation> \\
Preferred: <translation ID> \\
\bottomrule
\end{tabular}}
\caption{Prompt templates for LLM-based pairwise evaluation on the machine translation task in terms of overall quality, informativeness, and coherence. Here, ``\textbf{Source}'' is the source sentence to be translated. The choices of IDs are ``A'' and ``B''; ``\textbf{Translation-A}'' and ``\textbf{Translation-B}'' are replaced with model-generated texts. We still enumerate different combinations for a given pair, resulting in four LLM queries.}
\label{tab:llm_prompt_translation}
\end{table}

\end{document}